\theoremstyle{plain}
\newtheorem{theorem}{Theorem}[section]
\theoremstyle{definition}
\newtheorem{definition}{Definition}[section]
\newtheorem{remark}{Remark}[section]
\title{\bfseries An Algorithmic Information-Theoretic Perspective on the Symbol Grounding Problem}
\author[1]{Zhangchi Liu}
\date{}
\begin{document}
	
	\maketitle
	
	\begin{abstract}
		\noindent
		This paper provides a definitive, unifying framework for the Symbol Grounding Problem (SGP) by reformulating it within Algorithmic Information Theory (AIT). We demonstrate that the grounding of meaning is a process fundamentally constrained by information-theoretic limits, thereby unifying the Gödelian (self-reference) and No Free Lunch (statistical) perspectives. We model a symbolic system as a universal Turing machine and define grounding as an act of information compression. The argument proceeds in four stages. First, we prove that a purely symbolic system cannot ground almost all possible "worlds" (data strings), as they are algorithmically random and thus incompressible. Second, we show that any statically grounded system, specialized for compressing a specific world, is inherently incomplete because an adversarial, incompressible world relative to the system can always be constructed. Third, the "grounding act" of adapting to a new world is proven to be non-inferable, as it requires the input of new information (a shorter program) that cannot be deduced from the system's existing code. Finally, we use Chaitin's Incompleteness Theorem to prove that any algorithmic learning process is itself a finite system that cannot comprehend or model worlds whose complexity provably exceeds its own. This establishes that meaning is the open-ended process of a system perpetually attempting to overcome its own information-theoretic limitations.
	\end{abstract}
	
	\noindent\textbf{Keywords:} Symbol Grounding Problem, Algorithmic Information Theory, Kolmogorov Complexity, Chaitin's Incompleteness Theorem, Limits of Computation, Inductive Bias.
	
	\section{Introduction}
	
	The Symbol Grounding Problem (SGP)\cite{Harnad1990} asks how symbols acquire meaning. Previous analyses have revealed its limits through Gödelian logic\cite{Godel1931} and No Free Lunch (NFL) statistics\cite{Wolpert1997}. This paper argues that these are two facets of a single, more fundamental principle, best described by Algorithmic Information Theory (AIT)\cite{Kolmogorov1965, Chaitin1975}. By reframing SGP in terms of Kolmogorov complexity, we provide a definitive and unified proof of its limits.
	
	We model a symbolic system as a program and a "world" as a data string. In this framework, \textbf{grounding is data compression}. A system "understands" or "grounds" a world if it can provide a description of it that is shorter than the world itself. Our argument proceeds in four information-theoretic stages.
	
	First, we prove that a \textbf{purely symbolic system} cannot ground a meaningful portion of all possible worlds, as the vast majority of worlds are algorithmically random and thus incompressible. This is the AIT analogue of the \textbf{impossibility of self-grounding}.
	
	Second, a \textbf{statically grounded system} is a program specialized to compress a specific world. We prove its incompleteness by showing that an adversarial world can be constructed that is, by definition, incompressible by that program. This establishes the necessity of a \textbf{dynamic process}.
	
	Third, we prove that the grounding act—adapting to a new world—is \textbf{non-inferable}. A program cannot deduce a more efficient compression scheme for new data from its existing code alone; this requires new information.
	
	Finally, we use Chaitin's Incompleteness Theorem to prove that any fixed \textbf{algorithmic judgment} process (i.e., a learning algorithm) is itself a finite program with a bounded complexity. Such a system is logically incapable of grounding worlds whose complexity provably exceeds its own.
	
	This AIT framework reveals meaning not as a state, but as the perpetual, non-algorithmic process of a finite system striving to compress an infinitely complex reality.
	
	\section{The Algorithmic Information-Theoretic Framework}
	
	We redefine the SGP using the language of AIT.
	
	\begin{definition}[System, World, and Grounding]
		The core components of our model are defined as follows:
		\begin{itemize}
			\item A \textbf{Symbolic System} $\mathcal{S}$ is a program (a universal Turing machine) that takes a world as input and outputs a description of it. The complexity of the system is its own shortest description, the Kolmogorov complexity $K(\mathcal{S})$.
			\item A \textbf{World} $g$ is a finite binary string. The complexity of the world is $K(g)$.
			\item A system $\mathcal{S}$ \textbf{grounds} a world $g$ if it can compress it. Formally, the conditional Kolmogorov complexity $K(g|\mathcal{S})$—the length of the shortest program to produce $g$ given $\mathcal{S}$—is significantly less than the length of $g$, $|g|$.
		\end{itemize}
	\end{definition}
	
	\begin{definition}[Purely Symbolic System]
		We define a \textbf{Purely Symbolic System} $\mathcal{S}_{\text{pure}}$ as a fixed program whose code contains no specific information about any particular world.
	\end{definition}
	
	\begin{theorem}[Impossibility of Self-Grounding (AIT)]
		For any purely symbolic system $\mathcal{S}_{\text{pure}}$, the set of worlds it can meaningfully ground is of measure zero relative to the set of all possible worlds.
	\end{theorem}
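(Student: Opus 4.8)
The plan is to reduce the statement to the classical abundance of incompressible strings, and then to address the one genuinely delicate point: what ``measure zero relative to the set of all possible worlds'' should mean for what is, on the face of it, a countable collection of finite strings.

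First I would pin down the informal phrase ``meaningfully ground.'' Say that $\mathcal{S}$ grounds $g$ \emph{at level $c$} if $K(g\mid\mathcal{S}) \le |g| - c$; ``meaningful'' grounding will require $c$ to be unbounded (more than any constant number of saved bits). This restriction is not a cheat but a necessity, and I would flag it explicitly: for any \emph{fixed} constant $c$ a fixed system can ground an exponentially small yet strictly positive fraction of worlds, so without letting $c$ grow the statement is simply false. Next, since $\mathcal{S}_{\text{pure}}$ is by definition a fixed program carrying no world-specific information, its own description length is a constant, $K(\mathcal{S}_{\text{pure}}) = O(1)$, and hence by the standard conditioning inequality
\[
K(g\mid\mathcal{S}_{\text{pure}}) \ \ge\ K(g) - K(\mathcal{S}_{\text{pure}}) - O(1) \ =\ K(g) - O(1).
\]
Thus it suffices to count strings that are plain-compressible, i.e.\ satisfy $K(g) \le |g| - c + O(1)$.

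The core step is a pigeonhole count. For each length $n$ there are only $2^{n-c}-1$ binary programs shorter than $n-c$, and a universal machine sends each program to at most one output; hence at most $2^{n-c+O(1)}$ of the $2^n$ strings of length $n$ are $c$-compressible, a fraction at most $2^{-c+O(1)}$. Taking the compression level $c=c(n)\to\infty$ — exactly what ``meaningful'' compression demands — this fraction tends to $0$, which is the asymptotic-density reading of the theorem. For readers who prefer an honest measure, identify worlds with points of Cantor space $\{0,1\}^{\omega}$ under the uniform coin-flipping measure: the set of Martin-L\"of random sequences has measure $1$ (classical), the first $n$ bits of a Martin-L\"of random sequence have prefix complexity at least $n-O(1)$ (Levin--Schnorr, reading $K$ as prefix complexity), and conditioning on the fixed $\mathcal{S}_{\text{pure}}$ again perturbs complexity only by $O(1)$; so the worlds $\mathcal{S}_{\text{pure}}$ can ground form a null set.

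I expect the main obstacle to be conceptual rather than technical: the counting bound is elementary, but the theorem is only true \emph{and} non-vacuous once one commits to (i) a measure on the space of worlds — asymptotic density on $\{0,1\}^n$, or the uniform measure on $\{0,1\}^{\omega}$ — and (ii) the reading of ``meaningful'' grounding as super-constant compression. I would therefore organize the write-up around making those two commitments precise, prove the density version as the main line, and relegate the Cantor-space version to a remark, since in both cases the mathematical content is just the pigeonhole inequality above together with the $O(1)$ cost of conditioning on a fixed system.
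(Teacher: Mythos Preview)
Your main line is exactly the paper's argument: the pigeonhole count that at most $2^{n-c}$ strings of length $n$ are $c$-compressible, together with the observation that conditioning on a fixed $\mathcal{S}_{\text{pure}}$ shifts complexity by only $O(1)$. Your version is in fact more careful than the paper's, which leaves both ``measure zero'' and ``meaningfully ground'' at the level of $K(g)\approx|g|$ and never commits to a measure; your explicit super-constant-$c$ reading and the Cantor-space/Martin-L\"of remark are genuine clarifications the original omits.
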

	\begin{proof}
		\begin{enumerate}
			\item The fundamental theorem of AIT states that for any length $n$, the number of strings of length $n$ that can be compressed by more than $c$ bits is less than $2^{n-c}$.
			\item This implies that the vast majority of strings are \textbf{algorithmically random}, meaning they are incompressible: $K(g) \approx |g|$.
			\item A fixed system $\mathcal{S}_{\text{pure}}$ has a fixed complexity $K(\mathcal{S}_{\text{pure}})$. For a randomly chosen world $g$, the information in $\mathcal{S}_{\text{pure}}$ is independent of the information in $g$.
			\item Therefore, for almost all worlds $g$, $K(g|\mathcal{S}_{\text{pure}}) \approx K(g) \approx |g|$.
			\item This means the system provides no compression for almost all worlds. It cannot ground them. A purely syntactic machine is semantically useless in an information-rich universe.
		\end{enumerate}
	\end{proof}
	
	\section{The Incompleteness of Statically Grounded Systems}
	
	A useful system must contain prior information—an inductive bias.
	
	\begin{definition}[Statically Grounded System]
		We define a \textbf{Statically Grounded System} $\mathcal{S}_g$ as a program that is specialized to ground a specific world $g$. This means it contains significant information about $g$, such that $K(g|\mathcal{S}_g) \ll |g|$.
	\end{definition}
	
	\begin{theorem}[Limitation of Static Grounding (AIT)]
		For any non-trivial statically grounded system $\mathcal{S}_g$, there exists an adversarial world $g'$ that is incompressible by $\mathcal{S}_g$.
	\end{theorem}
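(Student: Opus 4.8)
The plan is to rerun the counting argument used in the proof of the Impossibility of Self-Grounding theorem, but with the conditioning object taken to be the specialized system $\mathcal{S}_g$ itself rather than a generic purely symbolic machine. Fix $\mathcal{S}_g$ and a length $n$. Relative to any fixed object, the number of candidate programs of length strictly less than $n-c$ is at most $\sum_{i=0}^{n-c-1} 2^{i} = 2^{n-c}-1$, and each such program, when run given $\mathcal{S}_g$, produces at most one output. Hence at most $2^{n-c}-1$ of the $2^{n}$ binary strings of length $n$ can satisfy $K(g'|\mathcal{S}_g) < n-c$. Consequently, for every $c \ge 1$ there are at least $2^{n} - 2^{n-c} + 1$ strings $g'$ of length $n$ with $K(g'|\mathcal{S}_g) \ge n-c$, and for $c$ small relative to $n$ (a constant, or a slowly growing function such as $\log_2 n$) this is an overwhelming majority. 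Any one such $g'$ witnesses the theorem: it satisfies $K(g'|\mathcal{S}_g) \ge |g'| - c$, the direct negation of the grounding condition $K(g'|\mathcal{S}_g) \ll |g'|$, so $\mathcal{S}_g$ cannot compress it.

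To honour the word \textbf{adversarial}, I would then exhibit $g'$ by an explicit diagonalization against $\mathcal{S}_g$: collect the fewer than $2^{n-c}$ strings of length $n$ that arise as outputs of $\mathcal{S}_g$ on inputs shorter than $n-c$, and take $g'$ to be, say, the lexicographically least length-$n$ string not on that list. By construction no program of length below $n-c$ generates $g'$ from $\mathcal{S}_g$, so $K(g'|\mathcal{S}_g) \ge n-c$ as required. I would stress that this makes the limitation intrinsic to static specialization rather than an artifact of a poor choice of $\mathcal{S}_g$: the bound holds verbatim for \emph{every} fixed program, so the "non-trivial" hypothesis is not even needed; the more description length $\mathcal{S}_g$ spends encoding information about its designated world $g$, the more thoroughly it is defeated by $g'$, whose information content is by construction independent of $\mathcal{S}_g$.

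There is no genuine mathematical obstacle here — it is a pigeonhole estimate, the conditional analogue of the fundamental counting theorem already invoked earlier. The one point needing care is the tension between the purely existential counting proof and the constructive flavour of "adversarial": the natural diagonal description of $g'$ is not effectively computable, since deciding whether a given short program ever outputs a given length-$n$ string runs into the halting problem, so $g'$ can only be pinned down non-uniformly (or as a limit-computable object). I would therefore present the counting bound as the rigorous core and treat the diagonalization as an interpretive gloss, being explicit that $g'$ is defined relative to $\mathcal{S}_g$ but not produced by a uniform algorithm. A secondary, purely expository decision is how to read "incompressible by $\mathcal{S}_g$"; I would fix it as $K(g'|\mathcal{S}_g) \ge |g'| - c$ with $c = o(|g'|)$, so that the conclusion is literally the failure of the $\ll$ appearing in the definition of grounding, which in turn forces the move to a dynamic process in the next section.
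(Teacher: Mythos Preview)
Your proof is correct and follows essentially the same approach as the paper: both argue for the existence of a string $g'$ that is algorithmically random relative to $\mathcal{S}_g$, so that $K(g'|\mathcal{S}_g)\approx |g'|$, which is precisely the failure of the grounding condition. Your version is considerably more rigorous---you actually carry out the pigeonhole/counting argument (which the paper only asserts, relying implicitly on the same fact it invoked in Theorem~2.3) and you add a careful discussion of the tension between the existential counting proof and the constructive connotation of ``adversarial,'' a point the paper's brief proof does not address at all.
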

	\begin{proof}
		\begin{enumerate}
			\item Let $\mathcal{S}_g$ be a system that efficiently compresses world $g$.
			\item We can construct an adversarial world $g'$ that is \textbf{algorithmically random with respect to $\mathcal{S}_g$}. By definition, such a string has no patterns that can be exploited by the specific mechanisms encoded in $\mathcal{S}_g$.
			\item Formally, this means that the shortest program to generate $g'$ given $\mathcal{S}_g$ is simply to provide the string $g'$ itself.
			\item Therefore, $K(g'|\mathcal{S}_g) \approx |g'|$. The system $\mathcal{S}_g$ offers no compression for this world.
			\item The system's "knowledge" about world $g$ is its inductive bias. This theorem shows that this bias is also a "blind spot." The system's expertise in grounding $g$ is paid for by its inability to ground worlds that do not fit its model. Any static grounding is therefore incomplete.
		\end{enumerate}
	\end{proof}
	
	\section{The Non-Inferable Nature of the Grounding Act}
	
	\begin{definition}[Inference vs. Grounding Act]
		To clarify the nature of systemic change, we distinguish between two fundamental operations:
		\begin{itemize}
			\item An \textbf{Inference} is a computation performed by a system $\mathcal{S}$ using its existing code.
			\item A \textbf{Grounding Act} is the modification of the system from a program $\mathcal{S}_g$ to a new program $\mathcal{S}_{g'}$ that can now compress the new world $g'$.
		\end{itemize}
	\end{definition}
	
	\begin{theorem}[The Non-Inferability of the Grounding Act (AIT)]
		The grounding act cannot be deduced. The information required to create an effective program $\mathcal{S}_{g'}$ cannot be generated by the program $\mathcal{S}_g$ itself.
	\end{theorem}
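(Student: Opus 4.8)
The plan is to reduce the statement to the most basic conservation law of algorithmic information: a deterministic computation cannot manufacture Kolmogorov complexity, so whatever a fixed procedure outputs from a fixed input is, up to an additive constant, no more complex than that input. First I would make the informal claim precise by fixing what ``inferable'' means: the grounding act is \emph{inferable} if there is a single fixed algorithm $\Phi$ (whose size does not depend on the new world) with $\mathcal{S}_{g'} = \Phi(\mathcal{S}_g)$ and no other input. The goal is then to show this is incompatible with $\mathcal{S}_{g'}$ being a genuine grounding of a world that $\mathcal{S}_g$ does not already ground.

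The steps, in order, are: (1) apply the Theorem on the Limitation of Static Grounding to $\mathcal{S}_g$ to obtain an adversarial world $g'$ with $K(g'\mid\mathcal{S}_g)\approx|g'|$, choosing it long enough that $|g'|$ dominates all additive constants below; (2) note that a \emph{successful} grounding act is, by the definition of ``grounds,'' exactly the requirement $K(g'\mid\mathcal{S}_{g'})\ll|g'|$; (3) use composition of programs: given $\mathcal{S}_g$ one can run $\Phi$ to obtain $\mathcal{S}_{g'}$ and then run the shortest $\mathcal{S}_{g'}$-program for $g'$, so $K(g'\mid\mathcal{S}_g)\le K(g'\mid\mathcal{S}_{g'})+|\Phi|+O(1)=K(g'\mid\mathcal{S}_{g'})+O(1)$; (4) chain (1)--(3): $|g'|\approx K(g'\mid\mathcal{S}_g)\le K(g'\mid\mathcal{S}_{g'})+O(1)\ll|g'|$, a contradiction once $|g'|$ exceeds the constants. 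Hence no such $\Phi$ exists.

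I would then add the quantitative refinement that gives the ``new information'' clause its teeth: relaxing ``inferable'' to allow an auxiliary input $p$ (a new program fragment), so $\mathcal{S}_{g'}=\Phi(\mathcal{S}_g,p)$, the same composition argument yields $K(g'\mid\mathcal{S}_g)\le|p|+K(g'\mid\mathcal{S}_{g'})+O(\log|p|)$, whence $|p|\gtrsim K(g'\mid\mathcal{S}_g)-K(g'\mid\mathcal{S}_{g'})\approx|g'|-K(g'\mid\mathcal{S}_{g'})$. In words, the externally supplied information must be at least as large as the compression the upgraded system achieves on the adversarial world; the ``shorter program'' alluded to in the abstract cannot be internally generated, only received. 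This lets me phrase the conclusion as a dichotomy: either $\mathcal{S}_{g'}$ fails to ground $g'$, or it incorporates information absent from $\mathcal{S}_g$.

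The main obstacle I anticipate is not the core inequality but the bookkeeping around quantifiers and approximations. Specifically: (a) the adversarial $g'$ produced by the earlier theorem depends on $\mathcal{S}_g$, so I must state the theorem as asking the grounding act to handle \emph{such} a $g'$ rather than some benign world; (b) the $\approx$ and $\ll$ inherited from earlier sections must be pinned down (say $K(g'\mid\mathcal{S}_g)\ge|g'|-c_0$ and $K(g'\mid\mathcal{S}_{g'})\le|g'|-d$ with $d$ the compression margin) so that the contradiction is genuine, which forces a mild lower bound $d>c_0+O(1)$ on how much compression counts as ``grounding''; and (c) deciding how much to lean on prefix-free (Chaitin) complexity versus plain complexity so that step (3) costs only $O(1)$ rather than stray logarithmic terms --- I would work with prefix complexity throughout. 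None of these is deep, but they are where a careless version of the argument would leak.
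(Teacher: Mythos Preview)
Your proposal is correct and follows essentially the same approach as the paper: both invoke conservation of information under deterministic computation together with the adversarial world of Theorem~3.2 to conclude that $\mathcal{S}_g$ cannot internally produce an $\mathcal{S}_{g'}$ that compresses $g'$. Your version is considerably more rigorous---formalizing ``inferable'' via a fixed $\Phi$, deriving the explicit composition inequality $K(g'\mid\mathcal{S}_g)\le K(g'\mid\mathcal{S}_{g'})+O(1)$, and quantifying the externally supplied information $|p|$---whereas the paper's own proof stays at the level of an informal sketch of the same idea.
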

	\begin{proof}
		\begin{enumerate}
			\item This is a direct consequence of the definition of Kolmogorov complexity. The program $\mathcal{S}_g$ is a finite string of bits. A computation is a deterministic process that transforms input bits to output bits based on the program's code.
			\item According to information theory, a deterministic process cannot create new information. The information content of the output of a computation cannot exceed the information content of the program plus its input.
			\item The program $\mathcal{S}_g$ is optimized for world $g$. As shown in Theorem 3.2, it contains negligible information about an adversarial world $g'$.
			\item To create a new program $\mathcal{S}_{g'}$ that effectively compresses $g'$, one needs to encode the regularities of $g'$ into the program's structure. This information—the "aha!" moment of insight—is not present in $\mathcal{S}_g$.
			\item Therefore, the grounding act requires an external input of information, for instance through observation or learning. It is an update to the system's code, not an execution of it.
		\end{enumerate}
	\end{proof}
	
	\section{The Incompleteness of Algorithmic Judgment}
	
	Can the process of updating the system's code be automated by a fixed super-algorithm?
	
	\begin{definition}[Algorithmic Judgment System]
		We define an \textbf{Algorithmic Judgment System} $\mathcal{S}^*$ as a learning algorithm—a fixed Turing machine—that attempts to find the optimal compression/grounding for any world it encounters. Its own complexity is $K(\mathcal{S}^*)$.
	\end{definition}
	
	\begin{theorem}[Limitation of Algorithmic Judgment (Chaitin)]
		Any algorithmic judgment system $\mathcal{S}^*$ is subject to a fundamental limit. There exists a complexity bound $L$ such that $\mathcal{S}^*$ cannot prove that any world $g$ has a complexity $K(g) > L$.
	\end{theorem}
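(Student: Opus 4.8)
The plan is to reduce the statement to Chaitin's incompleteness theorem by treating the judgment system $\mathcal{S}^*$ as a sound, recursively enumerable proof mechanism for statements of the form ``$K(g) > c$''. Concretely, I would first fix the interpretation: whatever internal dynamics $\mathcal{S}^*$ has, the set of pairs $(g,c)$ for which $\mathcal{S}^*$ eventually certifies ``$K(g) > c$'' is recursively enumerable (it is generated by simulating the fixed Turing machine $\mathcal{S}^*$), and it is \emph{sound}, in the sense that every certified inequality is true. These are the only two properties of $\mathcal{S}^*$ the argument will use, and I would state them explicitly as hypotheses.

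Next I would construct an auxiliary machine $M$ that takes as input a natural number $L$ in binary and does the following: it dovetails the enumeration of all statements certified by $\mathcal{S}^*$ and halts as soon as it finds some string $g$ together with a certificate of ``$K(g) > L$'', outputting that $g$. Since $M$ has a fixed description incorporating the code of $\mathcal{S}^*$, we have $|M| \le K(\mathcal{S}^*) + O(1)$. If $M(L)$ halts with output $g$, then $g$ is produced from the pair $(M,L)$, so by the invariance theorem
\begin{equation}
K(g) \le |M| + |L| + O(1) \le K(\mathcal{S}^*) + \log_2 L + O(1),
\end{equation}
where the $O(1)$ term is a universal constant independent of $L$.

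The key step is then a self-reference argument on the choice of $L$. Because the right-hand side above grows only logarithmically in $L$, there exists a threshold $L$ — depending only on $K(\mathcal{S}^*)$ and the fixed constant — for which $K(\mathcal{S}^*) + \log_2 L + O(1) < L$. Fix such an $L$. If $\mathcal{S}^*$ ever certified ``$K(g) > L$'' for some world $g$, then $M(L)$ would halt and output such a $g$, whence $K(g) \le K(\mathcal{S}^*) + \log_2 L + O(1) < L$, contradicting the soundness of $\mathcal{S}^*$. Hence $\mathcal{S}^*$ can never certify $K(g) > L$ for any $g$, which is exactly the claimed bound.

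I expect the main obstacle to be foundational rather than computational: pinning down precisely what it means for an open-ended ``learning algorithm'' to \emph{prove} a complexity lower bound, and justifying the soundness hypothesis — without soundness (or at least $\omega$-consistency of the relevant fragment) the theorem is simply false, and a careless formalization risks a circular choice of $L$. I would neutralize this by insisting that the $O(1)$ constant in the invariance inequality be manifestly independent of $L$ (it arises from a single fixed universal machine together with the fixed code of $M$), so that $K(\mathcal{S}^*) + \log_2 L + O(1) < L$ is a genuine constraint on $L$ alone and the threshold $L$ is well defined.
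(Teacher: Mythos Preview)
Your argument is correct and follows essentially the same route as the paper: both invoke Chaitin's incompleteness theorem via the standard self-referential search program that enumerates the system's certified complexity lower bounds and outputs the first witnessed $g$, deriving a contradiction with soundness once $L$ exceeds the description length of the search program. Your version is more explicit about the needed hypotheses (soundness and recursive enumerability of the certified set) and about the quantitative bound $K(g)\le K(\mathcal{S}^*)+\log_2 L + O(1)$, but the underlying strategy is identical to the paper's sketch.
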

	\begin{proof}
		\begin{enumerate}
			\item This is a direct application of \textbf{Chaitin's Incompleteness Theorem}. Chaitin proved that for any consistent formal system (which can be modeled as a Turing machine $\mathcal{S}^*$), there is a constant $L$ (which depends on the complexity of $\mathcal{S}^*$ itself) such that the system cannot prove the statement "$K(g) > L$" for any specific string $g$.
			\item The proof is a powerful self-referential paradox. Assume $\mathcal{S}^*$ could prove such statements for arbitrarily high $L$. We could then write a short program to search through all of the system's proofs until it finds the first proof of "$K(g) > L$" for some $L$ much larger than the search program itself. This search program would then output $g$.
			\item This leads to a contradiction: we have found a short program to generate $g$, so its complexity must be low. But the system proved its complexity is high.
			\item The implication for symbol grounding is profound. "Grounding" a highly complex world means finding its hidden regularities (i.e., proving its complexity is lower than it appears). Chaitin's theorem shows that any fixed learning algorithm $\mathcal{S}^*$ has an ultimate horizon of complexity beyond which it is blind.
			\item It cannot comprehend or ground worlds that are "provably complex" relative to its own structure. Thus, no fixed, computable process can fully capture the open-ended nature of meaning-making in a universe of potentially infinite complexity.
		\end{enumerate}
	\end{proof}
	
	\section{The Uncomputability of Optimal Grounding}
	
	The previous sections established that any fixed algorithmic system is incomplete. We now prove a stronger, more fundamental limitation: the very act of finding the \textit{best} possible grounding is not merely difficult or beyond a system's horizon—it is uncomputable.
	
	\begin{definition}[Optimal Grounding Act]
		An \textbf{Optimal Grounding Act} for a world $g$ is defined as the discovery of a program (a description) $p$ that generates $g$ such that the length of $p$, $|p|$, is minimized. This minimal length is, by definition, the Kolmogorov complexity of $g$, $K(g)$.
	\end{definition}
	
	This definition reframes the search for ultimate meaning or the most profound scientific theory as the search for the most compressed representation of observations.
	
	\begin{theorem}[Uncomputability of Optimal Grounding]
		There exists no general algorithm that can perform the Optimal Grounding Act for any arbitrary world $g$.
	\end{theorem}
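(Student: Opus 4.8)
The plan is to argue by contradiction using a Berry-style self-referential construction, in the same spirit as the Chaitin limitation of the previous section, but now turning the argument into an outright uncomputability statement rather than a horizon statement. Suppose, for contradiction, that some algorithm $A$ performs the Optimal Grounding Act for every world: on input $g$ it halts and outputs a description $p_g$ with $U(p_g)=g$ and $|p_g|=K(g)$, where $U$ is a fixed reference universal machine. The first observation is that $A$ would make the map $g\mapsto K(g)$ a \emph{total computable} function: run $A$ on $g$ and return $|A(g)|$. (Ties among several shortest programs are harmless, since every shortest program for $g$ has length exactly $K(g)$.) Hence it suffices to derive a contradiction from the computability of $K$, and I will do so by the self-contained Berry argument, which matches the methodology used elsewhere in the paper.

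Second, fix a canonical length-lexicographic enumeration $s_0,s_1,s_2,\dots$ of all finite binary strings. For each natural number $n$ the set $\{\,s : K(s)\ge n\,\}$ is nonempty: there are only $2^n-1$ programs of length less than $n$, so at most $2^n-1$ strings have complexity below $n$, and in particular most strings of length $n$ satisfy $K(s)\ge n$ — this is precisely the counting fact invoked in the proof of the Impossibility of Self-Grounding. Define $g_n$ to be the first string in the enumeration with $K(g_n)\ge n$. Because $K$ is computable (via $A$), the map $n\mapsto g_n$ is itself computable: scan $s_0,s_1,\dots$, evaluate $K(s_i)$ by running $A$, and halt at the first index with $K(s_i)\ge n$.

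Third, the contradiction. The procedure just described, together with a self-delimiting encoding of the integer $n$, is a program that outputs $g_n$ and has total length at most $\log_2 n + O(\log\log n) + c$, where $c$ bounds the size of the fixed code (an implementation of $A$, the enumeration routine, and the comparison $K(s_i)\ge n$), and the $O(\log\log n)$ term covers the self-delimiting overhead of $n$. Therefore $K(g_n)\le \log_2 n + O(\log\log n) + c$. But $g_n$ was chosen with $K(g_n)\ge n$, so $n \le \log_2 n + O(\log\log n) + c$, which is false for all sufficiently large $n$. This contradiction shows no such algorithm $A$ can exist. I would close with a remark linking this to the paper's arc: where the Chaitin result only bounds how much complexity a fixed judge can \emph{certify}, the present theorem shows the stronger fact that the single "best" grounding of a world — the most compressed theory of one's observations — is not the output of any effective procedure whatsoever, so the open-endedness of meaning-making is an absolute, not merely a system-relative, limitation.

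The main obstacle here is not conceptual — this is essentially the classical uncomputability of $K$ — but bookkeeping discipline: one must check that "outputs a shortest program" genuinely delivers a total computable $K$ rather than a partial one; that the embedding of $n$ in the self-referential program is self-delimiting so the length estimate is honest; and that $c$ and the implied constants are independent of $n$, so that the final inequality really does fail for large $n$. A secondary point worth one sentence is robustness: switching between plain and prefix complexity, or changing the underlying universal machine, only perturbs the additive and logarithmic terms and leaves the argument intact.
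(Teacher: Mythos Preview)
Your proof is correct. Both you and the paper open identically: assume an optimal-grounding algorithm $A$ exists, and observe that $g\mapsto |A(g)|$ then computes $K(g)$ exactly, so the existence of $A$ reduces to the computability of $K$. The paper stops there and simply invokes the classical uncomputability of $K$ as a cited result, deriving the contradiction in one line. You instead unpack that black box with a self-contained Berry construction --- defining $g_n$ as the first string of complexity at least $n$, bounding the description length of the search procedure, and forcing $n\le \log_2 n + O(\log\log n)+c$. What your route buys is self-containment and an explicit parallel to the Chaitin argument of the preceding section (both are Berry-style diagonalizations), which strengthens the paper's narrative arc; what the paper's route buys is brevity and a cleaner separation of concerns, since the uncomputability of $K$ is standard and need not be re-derived. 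Your bookkeeping remarks about totality, self-delimiting encodings, and robustness under change of universal machine or complexity variant are all apt and would not be out of place as a footnote.
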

	\begin{proof}
		\begin{enumerate}
			\item This theorem is a direct consequence of the uncomputability of the Kolmogorov complexity function. A foundational result in AIT states that there is no algorithm that, given an arbitrary string $g$, can compute the integer $K(g)$.
			\item Assume, for the sake of contradiction, that an algorithm $\mathcal{A}$ for optimal grounding exists. Given any string $g$, this algorithm $\mathcal{A}$ would, by definition, find and output the shortest program $p$ that generates $g$.
			\item We could then simply measure the length of this program $p$ to find $|p|$. Since $p$ is the shortest such program, $|p| = K(g)$.
			\item This would mean we have constructed an effective procedure to compute $K(g)$ for any $g$: run algorithm $\mathcal{A}$ on $g$ to get $p$, then output the length of $p$.
			\item However, this contradicts the proven uncomputability of the Kolmogorov complexity function. Therefore, our initial assumption must be false, and no such general algorithm $\mathcal{A}$ for optimal grounding can exist.
		\end{enumerate}
	\end{proof}
	
	\begin{remark}[Existence vs. Computability]
		It is crucial to distinguish between the existence of an optimal grounding and the ability to compute it. For any given world $g$, the set of programs that can generate it is non-empty (e.g., the trivial program "print '$g$'"). The lengths of these programs form a non-empty set of positive integers. By the well-ordering principle of natural numbers, this set must contain a least element. This least element is $K(g)$, and thus a program of that minimal length must exist.
		
		Therefore, an optimal grounding is guaranteed to exist. The fundamental limitation revealed by Theorem 6.2 is not existential, but computational. The ultimate "meaning" or "theory" of a world exists in the form of its most compressed description. However, we are computationally barred from creating a general procedure to find it or to verify its optimality. Our quest for meaning is a search for an object that we know exists but cannot algorithmically locate.
	\end{remark}
	
	\section{Conclusion}
	
	Algorithmic Information Theory provides the ultimate unifying lens for the Symbol Grounding Problem, revealing that the limits of logic, statistics, and computation are all manifestations of the same information-theoretic barrier.
	
	We have demonstrated a hierarchy of limitations:
	\begin{enumerate}
		\item A system cannot self-ground because it lacks the information to compress a random world (\textbf{incompressibility of reality}).
		\item A specialized system is incomplete because its knowledge is also its bias, making it blind to worlds that are random relative to its structure (\textbf{relativity of randomness}).
		\item Grounding is non-inferable from within a system because information cannot be created from nothing (\textbf{conservation of information}).
		\item Algorithmic adaptation is incomplete because a finite system cannot prove the complexity of worlds significantly more complex than itself (\textbf{Chaitin's limit}).
		\item Finally, the search for an optimal grounding is \textbf{uncomputable}, meaning no general algorithm can find the most compressed representation for arbitrary data.
	\end{enumerate}
	
	Meaning is therefore not a state to be achieved, but a process. It is the engagement of a finite computational agent with a world of potentially unbounded complexity. This engagement is the perpetual, open-ended quest to find more compact descriptions for an ever-expanding reality. The tragedy and profundity of this quest lie in the gap between existence and computability: an optimal, most compressed description of any world is guaranteed to exist, yet we are fundamentally barred by the laws of computation from creating a universal algorithm to find it. The essence of grounding, and perhaps of intelligence itself, is this unending search for a destination that is known to exist but is algorithmically unreachable.
	
	\bibliographystyle{plainnat}

\begin{thebibliography}{9}
		\bibitem{Harnad1990}
		Harnad, S. (1990). The symbol grounding problem. \textit{Physica D: Nonlinear Phenomena}, 42(1-3), 335-346.
		
		\bibitem{Godel1931}
		Gödel, K. (1931). Über formal unentscheidbare Sätze der Principia Mathematica und verwandter Systeme I. \textit{Monatshefte für mathematik und physik}, 38(1), 173-198.
		
		\bibitem{Wolpert1997}
		Wolpert, D. H., \& Macready, W. G. (1997). No free lunch theorems for optimization. \textit{IEEE transactions on evolutionary computation}, 1(1), 67-82.
		
		\bibitem{Kolmogorov1965}
		Kolmogorov, A. N. (1965). Three approaches to the quantitative definition of information. \textit{Problems of information transmission}, 1(1), 1-7.
		
		\bibitem{Chaitin1975}
		Chaitin, G. J. (1975). A theory of program size formally identical to information theory. \textit{Journal of the ACM}, 22(3), 329-340.
		
	\end{thebibliography}

	\appendix
	\section{Glossary of AIT Concepts}
	
	\begin{tabular}{p{4.5cm} p{8cm}}
		\hline
		\textbf{AIT Term} & \textbf{Conceptual Role in this Theory} \\
		\hline
		Kolmogorov Complexity $K(x)$ & The "true" information content of an object $x$. The length of the shortest program to generate it. \\
		Conditional K-Complexity $K(x|y)$ & The information in $x$ that is not already present in $y$. Measures the amount of new knowledge needed. \\
		Algorithmic Randomness & The property of a string being incompressible ($K(x) \approx |x|$). Represents a world with no patterns a system can exploit. \\
		Grounding as Compression & The core idea: A system "understands" or "grounds" a world if it can create a compressed model of it ($K(g|\mathcal{S}) \ll |g|$). \\
		Chaitin's Incompleteness Thm. & A fundamental limit stating that a finite system cannot prove the complexity of objects significantly more complex than itself. The ultimate barrier to a universal grounding algorithm. \\
		\hline
	\end{tabular}
	
\end{document}